\typeout{IJCAI--23 Discounting in Strategy Logic}

\documentclass{article}
\pdfpagewidth=8.5in
\pdfpageheight=11in
\usepackage{ijcai23}


\usepackage{times}
\usepackage{soul}
\usepackage{url}
\usepackage[hidelinks]{hyperref}
\usepackage[utf8]{inputenc}
\usepackage[T1]{fontenc}
\usepackage[small]{caption}
\usepackage{graphicx}
\usepackage{amsmath}
\usepackage{amsthm}
\usepackage{booktabs}
\usepackage{algorithm}
\usepackage{algorithmic}
\urlstyle{same}

\usepackage{macros,environments}
\usepackage{amsfonts,amssymb,latexsym,wasysym,dsfont,stmaryrd}
\usepackage{bm,xspace}
\usepackage{url} 
\usepackage[usenames,dvipsnames,svgnames]{xcolor}
\usepackage{tikz}
\usetikzlibrary{arrows,positioning}
\usepackage{booktabs}
\usepackage{multirow}
\usepackage{mathtools}






\pdfinfo{
/TemplateVersion (IJCAI.2023.0)
}

\title{\papertitle}


\author{
Munyque Mittelmann$^{1}$\and
Aniello Murano$^{1}$\And
Laurent Perrussel$^{2}$
\affiliations
$^1$University of Naples ``Federico II''\\
$^2$University of Toulouse - IRIT\\
\emails \{munyque.mittelmann, aniello.murano\}@unina.it,
laurent.perrussel@irit.fr
}

\begin{document}

\maketitle

\begin{abstract}
Discounting is an important dimension in multi-agent systems as long as we want to reason about strategies and time. It is a key aspect in economics as it captures the intuition that the far-away future is not as important as the near future. 
Traditional verification techniques allow to check whether there is a winning strategy for a group of agents but they do not take into account the fact that satisfying a goal sooner is different from satisfying it after a long wait. 
In this paper, we augment Strategy Logic with \emph{future discounting} over a set of discounted functions \Func, denoted \SLD. We consider “until” operators with discounting functions: the satisfaction value of a specification in \SLD\ is a value in $[0, 1]$, where the longer it takes to fulfill requirements, the smaller the satisfaction value is. We motivate our approach with classical  examples from Game Theory  and  study the complexity of model-checking  \SLD-formulas.

\end{abstract}

\section{Introduction}
\label{sec:intro}

The goal of this paper is to advance the research on strategic reasoning and formal verification by considering a discounting effect: the utility of agents decreases over time.
Boolean state-transition models 
have been widely used to define the semantics of  temporal and strategic logics, including Linear Temporal Logic (\LTL) \cite{pnueli1977temporal},  Alternating-time Temporal Logic (\ATL) \cite{AlurHK02}, Strategy Logic (\SL) \cite{MMPV14,ChatterjeeHP10}.  
In conjunction with model checking techniques  \cite{clarke2018handbook}, these formal frameworks are useful for the representation and verification of  hardware and software systems. 
Given a strategic logic specification, 
the correctness of a system is a yes/no matter: either the system satisfies the specification or it does not. Complex systems that interact with a physical environment or that are composed of multiple autonomous agents 
may have quantitative aspects described by real numbers (e.g. utilities, time and costs). Evaluating the \emph{quality} of such systems through the Boolean satisfaction of the specifications is often inadequate. Different 
levels of quality may exist, and this should be reflected in the output of the verification procedure \cite{AlmagorBK14}. 

In this work, we are interested in verifying Multi-Agent Systems (MAS) whose quality assessment needs to take into account that satisfying the goal sooner is different from satisfying it after a long wait. To illustrate this setting, 
consider an agent whose task is to organize a trip and who is facing the problem of booking a flight. An early booking is more susceptible to becoming unfeasible in the case of unforeseen changes in the travel plans. On the other hand, waiting to book may result in 
more important costs 
for the agent. Moreover, the trip-organizing agent may be a part of a system composed of other, self-interested, agents. In this case, the agents' interactions can also influence their ability to find reasonable flight options and price tags. On one side, there is a competitive aspect when agents dispute the last available tickets. Cooperation could also take place as some companies offer discounts for group booking. 
To address this problem for (single-agent) systems, researchers have suggested to augment Linear Temporal Logic with future discounting \cite{henzinger2005,AlmagorBK14}. In the discounted setting, the satisfaction value of specifications is a numerical value, and it depends, according to some discounting function, on the time waited for eventualities to get satisfied. 

Discounting is a key dimension in Economics and has been studied in Markov decision processes \cite{Jerzy1996} as well as game theory \cite{shapley1953stochastic} and system theory \cite{alfaro2003discounting} to capture the  intuition that the far-away future is not as important as the near future. The multi-agent setting 
has also been widely investigated, including repeated games 
\cite{abreu1988theory,fudenberg2009folk,pkeski2014repeated}, 
the prisoner's dilemma game \cite{harris2002delay,locey2013social}, 
and negotiation protocols \cite{weg1990two,
FatimaWJ06}, to name a few.   
Previous work \cite{jamroga2008temporal,DBChenFKPS13} have initiated to study logics inspired on \ATL and Markov chains for reasoning about discounting in stochastic MAS. Likewise \ATL, these logics are unable to capture complex solution concepts in MAS (such as Nash equilibria), which are important when evaluating the possible outcomes of such systems.

\paragraph{Contribution.}  
In this work, we augment Strategy Logic with future discounting, denoted \SLD, and study its complexity for model-checking.  
The main advantage of this logic is that it allows us to express and verify (i) the strategic abilities of agents to achieve certain goals while considering temporal discounts,  
and (ii) complex strategy concepts such as Nash equilibrium of discounted games. %
Different from previous work, we focus on deterministic games   
and consider temporal discounting alongside a logic that
quantifies over strategies. This enables an unbounded number of alternations from strategic operators which is necessary to capture complex solution concepts. In relation to technical results, we also studied the complexity of the model-checking problem under memoryless and perfect recall strategies, which was not established in \cite{jamroga2008temporal}. 

\SLD\ represents a family of logics, each one parameterized by a set of discounting functions. Considering a set of functions allows us to model games in which each agent, or a coalition of them, is affected differently by how long in the future events occur (\eg, patient vs hurried agents). 
 We also provide complexity results for model-checking and motivate the approach with classical examples from Game Theory.

This is the first work to consider a Strategy Logic with discounting for strategic reasoning in MAS. 
We aim at paving the way for a new line of research 
that applies the formal techniques developed for verification and reasoning in MAS  to game-theoretic problems involving future discounts.

\paragraph{Outline.} The paper\footnote{This paper is an extended version of a paper accepted to the 32nd International Joint Conference in Artificial Intelligence (IJCAI 2023). } is organized as follows:  we start by discussing related work in Section \ref{sec:rw}. Then, we define Strategy Logic with future discounts, denoted \SLD\ (Section \ref{sec:SL}). We proceed by introducing  problems and concepts on using  discounting in multi-agent games and illustrate the use of \SLD\ (Section \ref{sec:games}). Next, we study the complexity results for model checking (Section \ref{sec:mc}). 
Finally, we conclude the paper and point directions for future work (Section \ref{sec:conclusion}). 

\section{Related Work} 
\label{sec:rw}
Weighted games have been studied in the literature in relation to various kinds of objectives, including  parity~\cite{EJ91},
mean-payoff~\cite{ehrenfeucht1979positional,ZwickP96},  
energy~\cite{CAHS03,BouyerFLMS08}, and combining qualitative and quantitative objectives  in equilibrium \cite{GutierrezMPRSW21}.
 \SLF~\cite{Bouyer19,10.1145/3582498} was recently introduced as a quantitative extension of \SL defined over weighted concurrent game structures. It extends \LTLF~\cite{almagor2016formally}, a multi-valued logic that augments \LTL with quality operators. \SLF subsumes both \SL and \LTLF and is expressive enough to express complex solution concepts such as Nash equilibrium and properties about quantities. An extension of \SLF with imperfect information and epistemic operators was recently proposed \cite{SLKF_KR21}.  Other quantitative extensions of  \LTL have been explored in the context of  averaging~\cite{BouyerMM14},  discounting
~\cite{AlmagorBK14,Mandrali12}, and mean-payoff objectives~\cite{BohyBFR13}. 
Quantitative extensions of \ATL have also been investigated, such as 
timed \ATL~\cite{HP06,BLMO07}, multi-valued~\ATL~\cite{JamrogaKKP20}, 
\ATL with resource bounds~\cite{AlechinaLNR17,AlechinaBDL18},
and
weighted versions of \ATL~\cite{LMO06,BullingG22,Ves15}. Another related  problem is prompt requirements (see, for instance, \cite{DBLP:conf/kr/AminofMRZ16,DBLP:conf/ijcai/FijalkowMMV20}), which consider a bound on the number of steps to satisfy the specification. 

To encode the notion that the importance of events should be discounted according to how late they occur, 
\citeauthor{henzinger2005} (\citeyear{henzinger2005}) proposed an extension of the 
 Computational Tree Logic with quantitative semantics. In this logic,  path operators
are discounted by a parameter that can be chosen to give more weight to states that are closer to the beginning of the path.
Later,  \citeauthor{AlmagorBK14} \citeyear{AlmagorBK14}  proposed \LTL augmented with an arbitrary set of discounting functions, denoted \LTLD; and further explored with unary propositional quality operators and average-operator. 

In the context of stochastic systems, \citeauthor{Jamroga08AAMAS} (\citeyear{Jamroga08AAMAS}) proposed the Markov Temporal Logic, which extends the Branching-Time Temporal Logic and captures discounted goals. Later, this approach was extended to the multi-agent setting \cite{jamroga2008temporal}. 
Finally, \citeauthor{DBChenFKPS13} (\citeyear{DBChenFKPS13}) considered a probabilistic extension of \ATL, alongside discounted rewards. 

Temporal and strategic logics  have been successfully applied alongside model-checking techniques to the certification of several types of MAS, such as voting protocols  \cite{BelardinelliCDJ17,JamrogaKM20}, autonomous robotic systems \cite{luckcuck2019formal},  smart contracts \cite{tolmach2021survey}, avionic systems \cite{elkholy2020model}, and task coordination robots \cite{lacerda2019petri}.

\section{Strategy Logic With Discounting}
\label{sec:SL}

Strategy Logic with Discount ($\SLD$) generalizes $\SL$ by adding discounting temporal operators. The logic is actually a family of logics, each parameterized by a set $\Func$ of discounting functions.
A function $\df: \setn\to [0,1]$ is a \emph{discounting function} if $\lim_{i\to \infty}\df(i)=\limit$, and $\df$ is 
non-increasing. Examples of  discounting functions include $\df(i)=\lambda^i$, for some $\lambda\in (0,1)$, and $\df(i)=\frac{1}{i+1}$. 

For the remainder of the paper, we fix a set of discounting functions $\Func$, a set of atomic propositions $\APf$, a set of agents $\Ag$, and a set of strategy variables $\Var$, except when stated otherwise. We let $\AgSize$ be the number of agents in $\Ag$.

The syntax of $\SLD$ adds to $\SL$ the operator $\phi\until_{\! \df} \psi$ (discounting-Until), for every function $\df \in \Func$. The logic is defined $\SLD$ as follows:  
  
\begin{definition}
  The syntax of \SLD\  is defined by the grammar
\begin{align*}
  	\varphi  ::= p \mid \neg \varphi \mid \varphi \lor \varphi \mid \Estrata{} 
  	\varphi \mid (\ag, \var)  \varphi  
        &\mid \X \phi \mid \phi \U \phi   \mid \phi \U_{\! \df} \phi  
\end{align*}
where $p\in\APf$,  $\var 
\in\Var$, $\ag\in\Ag$, and $\df\in \Func$. 
\end{definition}

The intuitive reading of the operators is as follows: $\Estrata{}\varphi$ means that there exists a strategy such that $\phi$ holds; $(\ag,\var)\varphi$ means that when strategy $\var$ is assigned (or ``bound'') to agent $\ag$, $\phi$ holds;  
$\X$ and $\U$ are the usual temporal operators ``next'' and ``until''. 
The intuition of the operator $\U_{\! \df}$ is that events that happen in the future have a lower influence, and the rate by which this influence decreases depends on the function $\df$.  .   

A variable is \emph{free} in a formula $\phi$ if it is bound to an agent without being quantified upon, and an agent $\ag$ is free in $\phi$ if $\phi$ contains a temporal operator ($\X$, $\U$, $\U_{\! \df}$)  not in the scope of any binding for $\ag$. The set of free variables and agents in $\phi$ is written $\free(\phi)$, and a formula $\phi$ is a \emph{sentence} if $\free(\phi)=\emptyset$. 

A 
state-transition model
is a labeled directed graph, 
in which the vertices represent the system states, the edges the state changes (\eg, according to environment or agents' actions), and the labels the Boolean characteristics of the state (\ie, the truth values of state atomic propositions). 
In this paper, we consider state-transition models in which there are multiple agents that act simultaneously and independently. These models are called concurrent game structures (\CGS). 

\begin{definition}
  \label{def-wcgs}
A \emph{concurrent game structure} (\CGS) is a tuple
$\wCGS=(\Act,\setpos,\pos_\init,\trans,\val)$ where 
(i) $\Act$ is a  finite set of \emph{actions};     (ii) $\setpos$ is a  finite set of \emph{positions};
(iii) $\pos_\init\in\setpos$ is an \emph{initial position};  
(iv)    $\trans:\setpos\times \Act^\Ag\to\setpos$    is a \emph{transition function};  
(v)    $\val:\setpos\to 2^\APf$ is a \emph{labeling function}.  
\end{definition}

In a position $\pos\in\setpos$, each player $\ag$ chooses an action $\mova\in\Mov$, and the game proceeds to position $\trans(\pos, \jmov)$ where $\jmov \in \Act^\Ag$ is an \emph{action profile}
$(\mova)_{\ag\in\Ag}$.  

We write $\profile{o}$ for a tuple of objects $(o_\ag)_{\ag\in\Ag}$, one for each agent, and such tuples are called \emph{profiles}.
Given a profile $\profile{o}$ and $\ag\in\Ag$, we let $o_\ag$ be agent $\ag$'s component, and $\profile{o}_{-\ag}$ is $(o_\agb)_{\agb\neq\ag}$. Similarly, we let $\Ag_{-\ag}=\Ag\setminus\{\ag\}$. 
For a group of $n$ agents $A = \{\ag_1, \dots, \ag_n\}$ and strategy profile $\sigma = \sigma_1, \dots, \sigma_n$ we write $(A,\sigma)$ as a shortcut for $(a_1,\sigma_1) \dots (a_n, \sigma_n)$.

A \emph{play} $\iplay=\pos_0\pos_1\ldots$ in $\wCGS$ is an infinite sequence of positions such that $\pos_0=\pos_\init$ and for every $i\geq 0$ there exists an action profile $\jmov$ such that $\trans(\pos_{i}, \jmov)=\pos_{i+1}$. We write $\iplay_i=\pos_i$ for the position at index $i$ in play $\iplay$. A \emph{history} $\hist$ is a finite prefix of a play, $\last(\hist)$ is the last position of history $\hist$, $|\hist|$ is the length of $\hist$ and $\Hist$ is the set of histories.


A (perfect recall) \emph{strategy} is a  function $\strat:\Hist
\to \Mov$ that maps each history to an action.   
A (memoryless) \emph{strategy} is a  function $\strat:\setpos \to \Mov$ that maps each position to an action. 
We let $\setstrat^\perfectrecall$ (similarly $\setstrat^\memoryless$) be the set of perfect recall strategies  (resp. memoryless strategies).  
For the remainder of the paper, we use $\memoryless$ and $\perfectrecall$ to denote memoryless and perfect recall, respectively, and we let $\setting = \{\memoryless, \perfectrecall\}$.

An \emph{assignment}  $\assign:\Ag\union\Var \to \setstrat$ is
a function from players and variables to strategies.
For an assignment
$\assign$, an agent $\ag$ and a strategy $\strat$ for $\ag$,
$\assign[a\mapsto\strat]$ is the assignment that maps $a$ to $\strat$ and is otherwise equal to
$\assign$, and 
$\assign[\var\mapsto \strat]$ is defined similarly, where $\var$ is a
variable. 

For an assignment $\assign$ and a state $\pos$, we let
$\out(\assign,\pos)$ be the unique play that continues $\pos$ following the strategies
assigned by $\assign$. Formally,
 $\out(\assign,\pos)$ is the play $\pos\pos_{0}\pos_{1}\ldots$ such that for all $i\geq 0$, $\pos_{i}=\trans(\pos_{i-1},\jmov)$ where for all $\ag\in\Ag$, $\jmov_\ag=\assign(\ag)(\pos\pos_{1}\ldots\pos_{i-1})$. 
 \begin{definition}	
 \label{def:semSL}
  Let $\CGS=(\Act,\setpos,\pos_\init,\trans,\val)$ be a \CGS, 
   $\assign$ be an assignment, and  $\setting\in \{\perfectrecall,\memoryless\}$. The satisfaction value $\semSL{\setting}{\assign}{\pos}{\varphi}\in[\lowb,1]$ of  an \SLD\ formula $\phi$ in a state~$\pos$ 
  is defined as follows, where $\iplay$ denotes $\out(\assign,\pos)$: %
  \begingroup
  \allowdisplaybreaks
  \begin{align*}
    \semSL{\setting}{\assign}{\pos}{p} & =
    \begin{cases}
   1        & \text{if } p \in \val(\pos)\\ 
   0        & \text{otherwise}
  \end{cases}
  \\
    \semSL{\setting}{\assign}{\pos}{\Estrat\varphi} &=  \max_{\strat \in \setstrat}
    \semSL{\setting}{\assign[\var \mapsto \strat]}{\pos}{\varphi} \\
    \semSL{\setting}{\assign}{\pos}{(\ag,\var)\varphi} & = \semSL{\setting}{\assign[\ag \mapsto
      \assign(\var)]}{\pos}{\varphi} \\ 
     \semSL{\setting}{\assign}{\pos}{\phi_1 \lor \phi_2} &= \max( \semSL{\setting}{\assign}{\pos}{\phi_1},  \semSL{\setting}{\assign}{\pos}{\phi_2})\\
       \semSL{\setting}{\assign}{\pos}{\neg \phi} &= 1-\semSL{\setting}{\assign}{\pos}{\phi}\\
    \semSL{\setting}{\assign}{\pos}{\X \phi} &= \semSL{\setting}{\assign}{\iplay_{1}}
    {\phi} \\
    \semSL{\setting}{\assign}{\pos}{\phi_1\U\phi_2} & = \sup_{i \ge 0}     \min\! 
       \big (\semSL{\setting}{\assign}{\iplay_{i}}{\phi_2},\\[-0.75em]
       & \hspace{2.5cm} 
      \min_{0\leq j <i}       \semSL{\setting}{\assign}{\iplay_{j}}{\phi_1} \big) \\\\[0.25em]       \semSL{\setting}{\assign}{\pos}{\phi_1\U_{\! \df} \phi_2} & = \sup_{i \ge 0}     \min\!         \big (d(i)\semSL{\setting}{\assign}{\iplay_{i}}{\phi_2},\\[-0.75em]
      & \hspace{2cm}
      \min_{0\leq j <i}       d(j)\semSL{\setting}{\assign}{\iplay_{j}}{\phi_1} \big) 
  \end{align*}
  \endgroup
\end{definition}
If $\phi$ is a sentence, its satisfaction value does not depend on the assignment, and we write $\semSL{\setting}{}{\pos}{\phi}$ for $\semSL{\setting}{\assign}{\pos}{\phi}$ where $\assign$ is any assignment. We also let $\semSLglobal{\setting}{\phi}=\semSL{\setting}{}{\pos_\init}{\phi}$.

Classical abbreviations are defined as follows: 
${\perp\colonequals\neg\top}$, ${\varphi\wedge\varphi'
\colonequals \neg (\neg \varphi \vee \neg \varphi')}$, ${\varphi\rightarrow \varphi' \colonequals \neg
\varphi \vee \varphi'}$, ${\F\psi \colonequals \top \U \psi}$,
${\G\psi \colonequals \neg \F \neg \psi}$ and ${\Astrat\varphi\colonequals \neg\Estrat\neg\varphi}$. The quantitative counterparts of  $\G$ and $\F$, denoted $\G_{\! \df}$ and $\F_{\! \df}$, are defined analogously. 

\begin{remark}     Since we consider discounting functions, the satisfaction value of future events in formulas involving discount functions tends to $\limit$.  \end{remark}

\paragraph{Relation with \LTLD, \SL and \SLF.}
\LTLD\ \cite{AlmagorBK14} is the fragment of \SLD\ without strategy quantification and bindings. 
Considering that the satisfactions values  $1$ and $0$ represent $true$ and $\textit{false}$ (resp.), \SL \cite{MMPV14} is a syntactical restriction of \SLD\  (without the discounted-Until). 
\SL\ cannot express that the value of a formula decays over time. 
To notice the difference, assume a CGS $\CGS$, an assignment $\assign$ and states $\pos, \pos'$ such that %
and $\out(\assign, \pos') = \pi_0\pi$ and $\pi = \out(\assign, \pos)$, that is, the outcome from $\pos'$  is the outcome from $\pos$ with the first state repeated. 
Assuming that $p \in \val(\pi_i)$ and $\df(i) \neq \df(i-1)$,  for some $i\geq 1$ and $p \in \Prop$, 
we have that $\semSL{\setting}{\assign}{\pos}{\F_{\!\df} \, p} \neq \semSL{\setting}{\assign}{\pi'}{\F_{\!\df} \, p}$.  
However, using the classical until, we have that $\semSL{\setting}{\assign}{\pos}{\F \, p} = \semSL{\setting}{\assign}{\pi'}{\F \, p}$. 
As for 
\SLF  \cite{Bouyer19}, notice it is interpreted over different classes of models from \SLD, namely weighted \CGS, which uses weight functions for atomic propositions in place of propositional labeling of states. 
\SLF is defined over a set of functions $\FuncF$ over $[0, 1]$, but its semantics does not enable to use these functions to capture the effect of future discounts. This is because  
functions are applied over the satisfaction value of formulas in a given state, independent from \emph{how far} in the play they are being evaluated w.r.t. the initial state.

\section{Discounting in Multi-Agent Games}
\label{sec:games}
We now introduce problems and concepts from Game Theory that motivated reasoning about discounts in MAS. 

\subsection{Nash Equilibrium for \SLD\ Goals}
\emph{Nash equilibrium} (NE) is a central solution concept in game theory that captures
the notion of a stable solution, that is a solution from which no single player can
individually improve his or her welfare by deviating \cite{Nisan2007}. 
Deterministic concurrent multi-player Nash equilibrium can be expressed using  \SL (or its extensions) for Boolean valued goals \cite{MMPV14} and quantitative goals \cite{Bouyer19}. With \SLD, we can express that agent's goals are affected by how long in the future they are achieved. 
  
		Let the \LTLD-formula $\psi_{\ag}$ (\ie, an \SLD\ formula without bindings and strategy quantification) denote the goal of agent $\ag$.
			We can express whether a strategy profile $\profile \strat = (\strat_\ag)_{\ag \in \Ag}$ 
   is a \emph{Nash equilibrium}   through the \SLD\ formula $$\varphi_{\text{NE}}(\profile \strat) \egdef 
   (\Ag,\profile \strat)\bigwedge_{\ag \in \Ag} \big(\Astrat[\varb]  (\ag, \varb) \psi_\ag \big) \to \psi_\ag $$

   The existence of a Nash equilibrium is captured by the formula $\hat{\varphi}_{\text{NE}} \egdef 
   \exists\profile{\strat} ( \varphi_{\text{NE}}(\profile \strat))$.  
This is a classical problem in game theory and, more precisely when studying games with future discounting \cite{FUDENBERG1990194}. 

As we shall see in the next sections, the goal $\psi_\ag$ of an agent $\ag$ may involve temporal discounts. 
In the booking agent example, for instance, the discounted goal $$\psi_\ag \egdef \text{priceunder}_\vartheta \U_\df \text{booked}_\ag$$ specifies that the flight ticket is affordable (that is, below a threshold $\vartheta$) until agent $\ag$ booked her ticket. The value obtained from achieving the goal later is reduced according to the discounted function $\df$.

\subsection{Secretary Problem}
The classical secretary problem studies the problem of an agent selecting online an element (called a “secretary”) the \emph{maximum value} from a known number of candidates to be presented one by one in random order.  As each item is presented she must either accept it, in which case the game ends, or reject it. In the second case, the next item in the sequence is presented and the agent faces the same choice as before \cite{freeman1983secretary}. 
Applications of this problem include agents' facing the decision of buying a house or  hiring employees. 
Several variants and extensions of the secretary problem are considered in the literature, including using time-dependent discount factors to reduce the benefit derived from selecting a secretary at a later time \cite{babaioff2009secretary}. 
The discounted setting captures the cost of rejecting elements. For instance, when seeking to purchase a house, an agent may prefer to  chose a  suboptimal house at the beginning of the game  than wait longer  to pick her most desirable house.
Recently, \citeauthor{DoHL022} (\citeyear{DoHL022}) investigated the selection of $k$ secretaries by a multi-agent selection committee. The hiring decision is made by a group of  voting agents that  specify whether they consider acceptable to hire the current candidate or not. 

With \CGS, we can represent deterministic
perfect information instances of the secretary problem. Let us consider the selection of $k$ secretaries by multiple voting agents. 
For each candidate $j$ from a finite set of candidates $C$, we let the atomic propositions $\text{present}_j$ denote whether  she was presented and $\text{hired}_j$ denote whether she was hired. Proposition $k\text{-hired}$ specifies whether $k$ secretaries were already selected\footnote{  
The formalization of the game as a \CGS is left to the reader. Examples on how to model  similar problems using  \CGS can be found in  \cite{mittelmann2020auction,SLKF_KR21}.}. 

The \SLD\ formula $\F_{\!\df} \, k\text{-hired}$ represents the goal of having enough candidates hired in the future. The satisfaction value of this goal decreases according to $\df$, denoting that it is preferable to hire $k$ candidates as soon as possible.  
The discounted goal 
 $$\exists\var \forall \profile\varb (\ag, \var)(\Ag_{-\ag}, \profile \varb)(\bigvee_{j\in C} \neg \text{present}_j) \U_{\! \df} \, k\text{-hired}$$
 represents that the voter $\ag$ has a strategy to ensure  that, no matter the strategies of the other agents, there are candidates still not presented until enough secretaries were hired.


In Figure \ref{fig:sec}, we exemplify the \CGS $\CGS_{sec}$ representing an instance of the secretary problem with  two voting agents, Ann and Bob, and three candidates, $a$, $b$, and $c$. In  the initial state ($q_0$), the agents vote on whether they want to hire candidate $a$ by performing the action $y$ or $n$. 
Candidate $a$ is hired only if both agents play $y$, in which case the game moves to state $q_2$. Otherwise, the game proceeds to state $q_1$ in which they can vote for candidate $b$ (and similarly, for candidate $c$ in state $q_3$. 
 The game ends when one secretary is hired (states $q_2$, $q_4$, and $q_6$) or all candidates have been presented (state $q_5$). 
 
 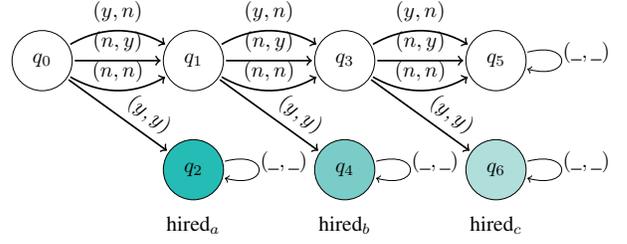
\begin{figure}
     \centering
     \scalebox{.8}{
     \begin{tikzpicture}[shorten >= 1pt, shorten <= 1pt, minimum size  = 1cm, auto]
  \tikzstyle{rond}=[circle,draw=black] 
 \tikzstyle{wina}=[fill=Emerald!70]
  \tikzstyle{winb}=[fill=Emerald!50]
   \tikzstyle{winc}=[fill=Emerald!30]
   \tikzstyle{label}=[sloped,shift={(0,-.2cm)}]
  
  \node[rond] (q0) {$q_0$};
  \node[rond] (q1) [ right = 1.5cm of q0] {$q_1$}; 
  \node[rond,wina] (q2) [below = 0.8cm of q1] {$q_2$};
  \node[rond] (q3) [right  = 1.5cm of q1] {$q_3$};
  \node[rond,winb] (q4) [below  = 0.8cm of q3] {$q_4$};
  \node[rond] (q5) [right  = 1.5cm of q3] {$q_5$};
  \node[rond,winc] (q6) [below = 0.8cm of q5] {$q_6$};

   \node (l1) [below = -.1cm of q2] {$\text{hired}_a$}; 
   \node (l2) [below = -.1cm of q4] {$\text{hired}_b$};
   \node (l3) [below = -.1cm of q6] {$\text{hired}_c$};
    

 \path[->,thick,bend right] (q0) edge  node [label,pos=.5]    {{$(n,n)$ }}    (q1);
 \path[->,thick,bend left] (q0) edge  node [label,pos=.5]    {{$(y,n)$ }}    (q1);
 \path[->,thick ] (q0) edge  node [label,pos=.5]    {{$(n,y)$ }}    (q1);
 \path[->,thick] (q0) edge  node  [label,pos=.7]   {{$(y,y)$}}    (q2);

  \path[->,thick,bend right] (q1) edge  node [label,pos=.5]    {{$(n,n)$ }}    (q3);
 \path[->,thick,bend left] (q1) edge  node [label,pos=.5]    {{$(y,n)$ }}    (q3);
 \path[->,thick ] (q1) edge  node [label,pos=.5]    {{$(n,y)$ }}    (q3);
 \path[->,thick] (q1) edge  node  [label,pos=.7]   {{$(y,y)$}}    (q4);

\path[->,thick,bend right] (q3) edge  node [label,pos=.5]    {{$(n,n)$ }}    (q5);
 \path[->,thick,bend left] (q3) edge  node [label,pos=.5]    {{$(y,n)$ }}    (q5);
 \path[->,thick ] (q3) edge  node [label,pos=.5]    {{$(n,y)$ }}    (q5);
 \path[->,thick] (q3) edge  node  [label,pos=.7]   {{$(y,y)$}}    (q6);

\path [->] (q2) edge[loop right]  node  [pos=.3]   {{$(\_,\_)$}}  (q2);
\path [->] (q4) edge[loop right]  node  [pos=.3]   {{$(\_,\_)$}}  (q4);
\path [->] (q6) edge[loop right]  node  [pos=.3]   {{$(\_,\_)$}}  (q6);
\path [->] (q5) edge[loop right]  node  [pos=.3]   {{$(\_,\_)$}}  (q5);
\end{tikzpicture}
}
     \caption{$\CGS_{sec}$ representing the secretary problem with three candidates ($a$, $b$ and $c$) and two voters (Ann and Bob). 
     In state $q_0$ (similarly, $q_1$ and $q_3$), Ann and Bob vote on whether to hire candidate $a$ (resp. $b$  and $c$).
     States $q_2$, $q_4$, and $q_6$ represent the situation in which candidate $a$, $b$ and $c$ were hired, respectively.
     } 
     \label{fig:sec}
 \end{figure}
 
We let the following \SLD\ formulas denote agent $a$'s and agent $b$'s goals, resp.:
$$\psi_{\text{Ann}} \egdef \F \, \text{hired}_b \lor \F_{\!\df_{\text{Ann}}} \, 1\text{-hired} $$
$$\psi_{\text{Bob}} \egdef \F_{\! \df_{\text{Bob}}} \, 1\text{-hired}$$
and we assume the  discount functions $\df_{\text{Ann}}(i)=\frac{1}{i+1}$ and $\df_{\text{Bob}}(i)=(\frac{1}{2})^i$. 
In other words, Ann's goal is to hire candidate $b$ in the future or to hire any candidate (with a discount according to $\df_{\text{Ann}}$), while Bob's goal is to hire a candidate in the future (with a discount given by $\df_{\text{Bob}}$). 
Notice that without the discount functions, hiring a secretary earlier would be similar to hiring  later. 
The two discount functions stress that Bob is more eager to hire a secretary than Ann. 
Table \ref{tab:discount} shows the value of the functions in each time $i$.

\begin{table}[h]
\centering
\begin{tabular}{*5c}  
\toprule
$i$ & 0 & 1 & 2 & 3 \\  
\midrule
$\df_{\text{Ann}}$ & 1 & 0.5 & 0.333 & 0.25 \\  
$\df_{\text{Bob}}$ &  1 & 0.5 & 0.25 & 0.125 \\ 
\bottomrule
\end{tabular}
\caption{Values for $\df_{\text{Ann}}(i)$ and $\df_{\text{Ann}}(i)$}
\label{tab:discount}
\end{table}

The satisfaction value of the agents' goals is only different from 0 in the states in which a candidate were hired. 
Let $\strat_{abc}$ denote the strategy of playing $y$ for each candidate (that is, $\strat_{abc}(q_0)=\strat_{abc}(q_1)=\strat_{abc}(q_3)=y)$, $\strat_{bc}$ denote the strategy of playing $y$ only for candidates $b$ and $c$, and  $\strat_{c}$ denote the strategy of playing $y$ only for $c$. Table \ref{tab:goalsec} shows the satisfaction value of agents goals' from the initial state $q_0$ for different assignments of strategies. 


\begin{table}[h]
\centering
\small
\begin{tabular}{ccccc}
\toprule
 &  & \multicolumn{3}{c}{$\assign(\text{Bob})$} \\ 
\multicolumn{1}{c}{} & \multicolumn{1}{c}{} & \multicolumn{1}{c}{$\strat_{abc}$} & \multicolumn{1}{c}{$\strat_{bc}$} & \multicolumn{1}{c}{$\strat_{c}$}  \\ \cmidrule(r){3-5}
\multicolumn{1}{c}{\multirow{3}{*}{$\assign(\text{Ann})$}} & \multicolumn{1}{l|}{$\strat_{abc}$} & \multicolumn{1}{c}{(0.5, 0.5)} & \multicolumn{1}{c}{(1, 0.25)} & \multicolumn{1}{c}{(0.25, 0.125)} \\ 
\multicolumn{1}{c}{} & \multicolumn{1}{l|}{$\strat_{bc}$} & \multicolumn{1}{c}{(1, 0.25)} & \multicolumn{1}{c}{(1, 0.25)} & \multicolumn{1}{c}{(0.25, 0.125)} \\ 
\multicolumn{1}{c}{} & \multicolumn{1}{l|}{$\strat_{c}$} & \multicolumn{1}{c}{(0.25, 0.125)} & \multicolumn{1}{c}{(0.25, 0.125)} & \multicolumn{1}{c}{(0.25, 0.125)} \\ 
\bottomrule
\end{tabular}
\caption{Value of $(\semSLsec{\assign}{q_0}{\psi_{\text{Ann}}}, \semSLsec{\assign}{q_0}{\psi_{\text{Bob}}})$ for different strategy assignments $\assign$.}
\label{tab:goalsec}
\end{table}



As illustrated on Table \ref{tab:goalsec},   the strategy profile $(\strat_{bc},\strat_{abc})$ is a Nash equilibrium and thus 
$\semSLsecglobal{\hat{\varphi}_{\text{NE}}}\neq 0$  
(note memoryless strategies are enough for this problem).

\subsection{Negotiation With Time Constraints}
Let us consider a second context where discounting is a key issue. Negotiation is a type of interaction in MAS in which disputing agents decide how to divide a resource. 
Time constraints, which may be in the form of both deadlines and discount factors, are an essential element of negotiation because the interaction cannot go on indefinitely and must end within a reasonable time limit \cite{livne1979role}. 
Here we consider the problem of 
negotiation with time constraints studied in \cite{rubinstein1982perfect,FatimaWJ06}, and generalize to the multiple agent case. In this problem, agents
want to determine how to divide (single or multiple) issues, called  ``pies'', of size $1$ among themselves. 
The negotiation must end in at most $n \in \setn^+$ rounds. This deadline can be represented with an arbitrary discounting function $\df_{n}$ such that $\df_{n}(n) = 0$. In this case, a goal in the form $\F_{\! \df_n} \, \psi$ motivates agents to achieve $\psi$ before the $n$-th stage of the negotiation.  

The negotiation process is made by alternating offers from the agents. Initially, $\ag$ starts by making an offer on how to divide a pie 
to the other agents $\Ag_{-\ag}$.
Agents in $\Ag_{-\ag}$ can either accept or reject this offer.  
If agents in $\Ag_{-\ag}$ accept, the negotiation ends in an agreement with the proposed share. 
Otherwise, an agent $\agb \neq \ag$
makes a counteroffer in the next round. The negotiation proceeds until there is an agreement on accepting an offer. 
The key feature of this problem is that the pie is assumed to shrink (\ie, to lose value) with time \cite{rubinstein1982perfect}. This represents the situation in which the pie perishes with time or is affected by  inflation.  The pie shrinkage is represented with by a discount function $\df_{pie}$. 
At time $i = 1$, the size of the pie is $1$, but in all subsequent time periods $i > 1$, the pie shrinks to $\df_{pie}(i)$. 

Figure \ref{fig:neg} shows the \CGS $\CGS_{ngt}$, which illustrates an instance of the negotiation problem with a single-issue and two agents, Alice and Beth. The game starts in state $q_0$, where Alice can make an offer to split the pie either so as to take half or two thirds of it for herself (while the remaining of the pie is left for Beth).   In the next state (either $q_1$ or $q_2$, according to Alice's action), Beth can perform the action $acc$ to accept  the offer or she can make a counteroffer and pass the turn to Alice. As soon as an agent accepts an offer, the negotiation ends and the pie is divided (\eg, states $q_3$, $q_6$, $q_9$, ans $q_{12}$).

 \begin{figure}
     \centering
     \scalebox{.8}{
     \begin{tikzpicture}[shorten >= 1pt, shorten <= 1pt, minimum size  = 1cm, auto]
  \tikzstyle{rond}=[circle,draw=black] 
  \tikzstyle{wina}=[fill=Emerald!70]
  \tikzstyle{winb}=[fill=Emerald!30]

   \tikzstyle{label}=[sloped,shift={(0,-.2cm)}]
  
  \node[rond] (q0) {$q_0$};
  \node[rond] (q1) [above right = 1.40cm and 1.5cm  of q0] {$q_1$}; 
  \node[rond] (q2) [below right= 1.40cm and 1.5cm of q0] {$q_2$};
 
  \node[rond] (q4) [right  = 2cm of q1] {$q_4$};
  \node[rond,wina] (q3) [above =0.5cm of q4] {$q_3$};
  \node[rond] (q5) [below = 0.5cm of q4] {$q_5$};
  \node (blanck5) [right = 0.5cm of q5] {$\cdots$};

  \node[rond] (q7) [right  = 2cm of q2] {$q_7$};
\node[rond,wina] (q6) [above = 0.5cm of q7] {$q_6$};
  \node[rond] (q8) [below = 0.5cm of q7] {$q_8$};
  \node (blanck8) [right = 0.5cm of q8] {$\cdots$};

  \node[rond] (q10) [right = 2cm of q4] {$q_{10}$};
  \node[rond,winb] (q9) [above = 0.5cm  of q10] {$q_9$};
  \node[rond] (q11) [below = 0.5cm of q10] {$q_{11}$};
  \node (blanck10) [right = 0.5cm of q10] {$\cdots$};
  \node (blanck11) [right = 0.5cm of q11] {$\cdots$};

  \node[rond] (q13) [right = 2cm of q7] {$q_{13}$};
  \node[rond,winb] (q12) [above = 0.5cm  of q13] {$q_{12}$};
  \node[rond] (q14) [below = 0.5cm of q13] {$q_{14}$};
  \node (blanck14) [right = 0.5cm of q14] {$\cdots$};
  \node (blanck13) [right = 0.5cm of q13] {$\cdots$};

 \path[->,thick] (q0) edge  node [label,pos=.5]    {{$([\frac{1}{2},\frac{1}{2}],\_)$ }}    (q1); 
 \path[->,thick] (q0) edge  node [label,pos=.5]    {{$([\frac{2}{3},\frac{1}{3}],\_)$ }}    (q2);
 \path[->,thick] (q1) edge  node [label,pos=.5]    {{$(\_,acc)$ }}    (q3);
 \path[->,thick] (q1) edge  node [label,pos=.5]    {{$(\_, [\frac{1}{2},\frac{1}{2}])$ }}    (q4);
\path[->,thick] (q1) edge  node [label,pos=.5]    {{$(\_,[\frac{1}{3},\frac{2}{3}])$ }}    (q5);

 \path[->,thick] (q2) edge  node [label,pos=.5]    {{$(\_,acc)$ }}    (q6);
 \path[->,thick] (q2) edge  node [label,pos=.5]    {{$(\_, [\frac{1}{2},\frac{1}{2}])$ }}    (q7);
\path[->,thick] (q2) edge  node [label,pos=.5]    {{$(\_,[\frac{1}{3},\frac{2}{3}])$ }}    (q8);

 \path[->,thick] (q4) edge  node [label,pos=.5]    {{$(acc,\_)$ }}    (q9);
 \path[->,thick] (q4) edge  node [label,pos=.5]    {{$( [\frac{1}{2},\frac{1}{2}],\_)$ }}    (q10);
\path[->,thick] (q4) edge  node [label,pos=.5]    {{$([\frac{2}{3},\frac{1}{3}],\_)$ }}    (q11);

 \path[->,thick] (q7) edge  node [label,pos=.5]    {{$(acc,\_)$ }}    (q12);
 \path[->,thick] (q7) edge  node [label,pos=.5]    {{$( [\frac{1}{2},\frac{1}{2}],\_)$ }}    (q13);
\path[->,thick] (q7) edge  node [label,pos=.5]    {{$([\frac{2}{3},\frac{1}{3}],\_)$ }}    (q14);

\path[->,thick] (q5) edge  node [label,pos=.5]    {}    (blanck5);
\path[->,thick] (q8) edge  node [label,pos=.5]    {}    (blanck8);
\path[->,thick] (q10) edge  node [label,pos=.5]    {}    (blanck10);
\path[->,thick] (q11) edge  node [label,pos=.5]    {}    (blanck11);
 \path[->,thick] (q13) edge  node [label,pos=.5]    {}    (blanck13);
 \path[->,thick] (q14) edge  node [label,pos=.5]    {}    (blanck14);
 
\path [->] (q3) edge[loop right]  node  [pos=.3]   {{$(\_,\_)$}}  (q3);
\path [->] (q6) edge[loop right]  node  [pos=.3]   {{$(\_,\_)$}}  (q6); 
\path [->] (q9) edge[loop right]  node  [pos=.3]   {{$(\_,\_)$}}  (q9); 
\path [->] (q12) edge[loop right]  node  [pos=.3]   {{$(\_,\_)$}}  (q12); 
\end{tikzpicture}
}
     \caption{$\CGS_{ngt}$ representing the single-issue negotiation problem with  two agents, who alternate into proposing a division of the resource.   The negotiation ends when one of the agents agree with the proposed division (\eg, at the colored states $q_3$, $q_6$, $q_9$, $q_{12}$).  
     } 
     \label{fig:neg}
 \end{figure}
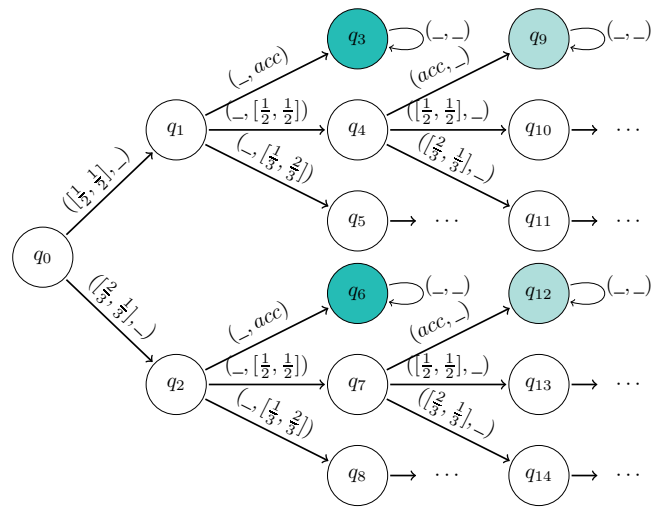

Let us use the atomic propositions $\text{twothird}_\ag$, $\text{half}_\ag$, and $\text{onethird}_\ag$ to denote whether agent $\ag \in \{$Alice, Beth$\}$ has received two thirds, half, or one-third of the pie. Agents may have different preferences for how much of the pie they receive. Discounting functions can be used to capture the share they are more eager to receive. For instance, let 
 $$\psi_\ag \egdef \F_{\! \df_{2/3}} \, \text{twothird}_\ag \lor \F_{\! \df_{1/2}} \, \text{half}_\ag \lor \F_{\! \df_{1/3}} \, \text{onethird}_\ag$$
 be the goal of agents $\ag \in\{Alice, Beth\}$, 
with the discounting functions defined as  $\df_{n/m} \egdef \frac{n}{m}\df_{pie}(i)$ for $n,m\in\{1,2,3\}$. This goal stresses that agent $\ag$ prefers to get two-thirds of the pie over half or one-third, and half of the pie over one-third. Note that for the sake of simplicity of this example, deadlines are not considered in $\psi_\ag$. %

To continue the example, consider that the  discounting function $\df_{pie}$ is defined as follows 
\[
\df_{pie}(i) =
    \begin{dcases}
        1 &\text{if } i \leq 2\\
        \Big(\frac{1}{2}\Big)^i &\text{otherwise}
    \end{dcases}
\] 
This represents that the pie starts shrinking only after the 2nd game stage (states $q_9$, $q_{10}$, $q_{11}$ and so on). After that, the pie shrinks by half in each successive state. In this case, the rate in which the pie shrinks motivates agents to accept the first proposed division.  

Given the discount function $\df_{pie}(i)$ and the goals $\psi_{\text{Alice}}$ and $\psi_{\text{Beth}}$, a Nash equilibrium from the game is the strategy profile $(\strat_{\text{Alice}},\strat_{\text{Beth}})$, where $\strat_{\text{Alice}}$ and $\strat_{\text{Beth}}$
are strategies such that $\strat_{\text{Alice}}(q_0) = [\frac{2}{3},\frac{1}{3}]$ and $\strat_{\text{Beth}}(q) = acc $ for any state $q$. Thus, we have that  
$\semSLnegglobal{\hat{\varphi}_{\text{NE}}}\neq 0$. 
  
\section{Model Checking \SL With Discounting}
\label{sec:mc}
In this section, we study the quantitative model checking problem for \SLD.  
Let us first define it formally.  

\begin{definition} 
The threshold \emph{model-checking problem} for \SLD\ consists in deciding, given a formula $\phi$, a CGS $\CGS$,  $\setting\in \{\perfectrecall,\memoryless\}$, and a threshold $\thresholdmc \in [0,1]$, whether $\semSLglobal{\setting}{\phi}\geq \thresholdmc$.
\end{definition}

\subsection{Memoryless Strategies}

Model-checking \SLD\ with memoryless agents is no harder than model-checking \LTL or classical \SL with memoryless agents.

\begin{theorem}
\label{theo-mc2}
Assuming that functions in $\Func$ can be computed in polynomial space, model checking \SLD\ with 
memoryless agents is \PSPACE-complete.
\end{theorem}

\begin{proof}
    The lower bound is inherited from \SL \cite{CERMAK2018588}\footnote{The proof provided in  \cite{CERMAK2018588} (Thm. 1) considers \SL\ with epistemic operators, but by carefully reading the proof one can notice  removing the operators and restricting to perfect information do not affect the complexity results. }, which is captured by \SLD.  
    For the upper bound, we first show that each recursive call only needs at most polynomial space. Most cases are treated analogously to the proof of Theorem 2 in \cite{SLKF_KR21}.  
We focus on the case $\semSL{\memoryless}{\assign}{\pos}{\phi_1\U_{\! \df} \phi_2}$.  Let $\iplay=\out(\pos,\assign)$. 
When evaluating a discounted operator on $\iplay$, one can restrict attention to two cases: either the satisfaction value of the formula goes below $\thresholdmc$, in which case this happens after a bounded prefix (with index $m \geq 0$), or the satisfaction value always remains above $\thresholdmc$, in which case we can replace the discounted operator with a Boolean one. This allows us to look only at a finite number of stages. 
In the first case, let $m \geq 0$ denote the first index in which the satisfaction value of the formula goes below $\thresholdmc$. 
Let $\varphi = \phi_1\U_\df\phi_2$, it follows that
\begin{align*}
    \semSL{\memoryless}{\assign}{\pos}{\phi} \!=\! \sup_{i \ge 0}     \min \! &
        \left
      (\!\df(i)\semSL{\memoryless}{\assign}{\iplay_{i}}{\phi_2}, 
      \!\min_{0\leq j <i} 
      \semSL{\memoryless}{\assign}{\iplay_{j}}{\phi_1}\!\right)\\
       = \max_{0\le i \le m}    \min \!&\left
      (\!\df(j)\semSL{\memoryless}{\assign}{\iplay_{i}}{\phi_2}, 
      \!\min_{0\leq j <i} 
      \semSL{\memoryless}{\assign}{\iplay_{j}}{\phi_1}\!\right)
\end{align*}
This can be computed by a while loop that increments $i$, computes 
$\semSL{\memoryless}{\assign}{\iplay_{i}}{\phi_2}$, 
 $\min_{0\leq j <i} 
      \semSL{\memoryless}{\assign}{\iplay_{j}}{\phi_1}$ and their minimum, records the result if it is bigger than the previous maximum, and stops upon reaching a position that has already been visited. 
This requires storing the current value of $\min_{0\leq j <i} 
      \semSL{\memoryless}{\assign}{\iplay_{j}}{\phi_1}$, the current maximum, and the list of positions already visited, which are at most $|\setpos|$.
The second case is treated as for Boolean until (see Appendix \ref{apx:memoryless} for more details).  
     
Next, the number of nested recursive calls is at most $|\phi|$, so the total space needed is bounded by $|\phi|$ times a polynomial in the size of the input, and is thus polynomial.
\end{proof}

\subsection{Perfect Recall}
Our solution to the problem of \SLD\ model checking for perfect recall applies the automata-theoretic approach \cite{thomas1990automata,vardi1986automata}. The solution opportunely combines the techniques used for model-checking in \cite{AlmagorBK14,MMPV14}. 
Let us recall relevant definitions from automata theory (see \cite{kupferman2000automata} for details). 

\paragraph{Alternating tree automata.}  
			An \emph{alternating tree automaton} (ATA) is a tuple
			$\auto = \langle \Sigma, \Delta, Q, \delta, q_{0}, \aleph\rangle$, where $\Sigma$, $\Delta$, and $Q$ are,
			respectively, non-empty finite sets of \emph{input symbols},
			\emph{directions}, and \emph{states}, $q_0 \in Q$ is an
			\emph{initial state}, $\aleph$ is an \emph{acceptance condition}, and $\delta : Q \times \Sigma \to {\cal B}^+(\Delta \times Q)
   $ is an \emph{alternating transition function} that maps each
			pair of states and input symbols to a positive Boolean combination on the
			set of propositions of the form $(d, q) \in \Delta\times Q$,
			called \emph{moves}.

A $\Sigma$-labeled tree is a pair $\langle T, v \rangle$ where $T$ is a tree and $V:T\to\Sigma$ maps each node of $T$ to a letter in $\Sigma$.
\paragraph{Run} A run of an ATA $\auto = \langle \Sigma, \Delta, Q, \delta, q_0, \aleph \rangle$ on a $\Sigma$-labeled
$\Delta$-tree $\tree = \langle T, v \rangle$ is a $(\Delta \times Q)$-tree $R$ such that for all nodes $x \in R$, where $x = \prod_{i=1}^n
(d_i, q_i)$ and $y= \prod^n_{i=1} d_i$ with  $n \in [0,\omega[$, it holds that (i) $y \in T$ and (ii) there is a set of moves $S \subseteq \Delta \times  Q$ with $S \models \delta(q_n, v(y))$ %
such that $x \cdot (d, q) \in R$ for all $(d, q) \in S$.

 \emph{Alternating parity tree automata} (APT) are alternating tree automata along with a \emph{parity acceptance
		condition} \cite{thomas2002automata}. We consider ATAs along with the parity acceptance condition (APT)
$\aleph = (F_1,\dots,F_k) \in (2^Q)^+$
with $F_1 \subseteq \dots \subseteq F_k = Q$. 
A nondeterministic parity tree automaton (NPT) is a special case of APT in which each conjunction in the transition function $\delta$ has exactly one move $(d, q)$ associated with each direction $d$.

\paragraph{APT Acceptance.} An APT $\auto = \langle \Sigma, \Delta, Q, \delta, q_{0}, \aleph\rangle$  accepts a $\Sigma$-labeled
$\Delta$-tree $\tree$ if and only if is there exists a run $R$ of $\auto$ on $\tree$ such that all its infinite branches satisfy the acceptance condition $\aleph$. 
 By $\mathcal{L}(\auto)$ we denote the language accepted by the APT $\auto$, that is, the set of trees $\tree$ accepted by $\auto$.  The emptiness problem for $\auto$ is to decide whether
$\mathcal{L}(\auto) = \emptyset$.

\subsubsection{From \SLD\ to APT}

We reuse the structure of the model-checking approach for \SL \cite{MMPV14}. Precisely, given a \CGS $\CGS$, a state $\pos$, and an \SL-sentence $\phi$, the procedure consists of building an NPT that is non-empty if $\phi$ is satisfied in $\CGS$ at state $\pos$ (Thm 5.8 \cite{MMPV14}).
As an intermediate step to obtain the NPT, the construction builds an APT $\auto$ that accepts a tree encoding of $\CGS$ containing the information on an assignment $\assign$ iff the \CGS satisfies the formula of interest for $\assign$.  
The NPT $\nauto$ is obtained by using an APT direction projection with distinguished direction $\pos$ to the APT $\auto$ (Thm 5.4 \cite{MMPV14}). 
The size of the APT $\auto$ is polynomial in the size of $\CGS$  and exponential in the number $k$ of alternations of strategy quantifiers. Then, building the NPT $\nauto$  and checking its emptiness requires an additional exponent on top of the number of alternations $k$, 
which leads to a final complexity 
$(k+1)$-$\EXPTIME$-complete (and $\Ptime$ in the size of the $\CGS$). 
For adapting this procedure to model checking of \SLD\ with perfect recall, we need to unpack and extend the construction of the  APT shown in Lemma 5.6 in \cite{MMPV14}, which we do here in the rest of this section.

We define a  translation for each  $\SLD$ formula $\phi$ to an APT $\auto$  that recognizes a tree encoding $\tree$ of a CGS $\CGS$, containing  the information on the assignment $\assign$ iff $\semSL{\perfectrecall}{\assign}{\pos_\init}{\psi}\geq\thresholdmc$. 

Defining the appropriate transition function for the $\auto$ follows the semantics of $\SLD$ in the expected manner. 
The transitions involving the discounting operators need a careful treatment, as discounting formulas can take infinitely many satisfaction values. As for \LTLD\ \cite{AlmagorBK14}, given a threshold $\thresholdmc$ and a computation $\pi$, when evaluating a discounted operator on $\pi$, one can restrict attention to two cases: either the satisfaction value of the formula goes below $\thresholdmc$, in which case this happens after a bounded prefix, or the satisfaction value always remains above $\thresholdmc$, in which case we can replace the discounted operator with a Boolean one. 

As for \cite{MMPV14}, we use the
concept of encoding for a \CGS assignment. First, let $\text{Val}_{\phi} \!\egdef\! \text{free}(\phi) \!\to\! \Act$.

\paragraph{Assignment-State Encoding. } Let $\CGS$ be a \CGS, $\pos \in \setpos$ be a state, and
$\assign$ be an assignment. Then, the assignment-encoding for $\assign$ is the $(\text{Val}_{\phi} \times \setpos)$-labeled $\setpos$-tree $\tree$, $\langle T, u\rangle$, such that $T$ is the set of histories $h$ of $\CGS$ given $\assign$ starting in $\pos$ and $u(h) \egdef (f,q)$, 
where    $q$ is the last state in $h$ and $f:\text{free}(\phi) \to \Act$ is defined by $f(\var)\egdef \assign(\var)(h)$ for each free variable $\var \in \text{free}(\psi)$.  

\begin{lemma}  
\label{lemma:translation}
Let $\CGS$ be a \CGS, $\phi$ an \SLD\ formula, and  $\thresholdmc\in [0,1]$ be a threshold.  Then, there exists an 
$\auto_{\phi,\thresholdmc} = \langle \text{Val}_{\phi} \times \setpos, \setpos, Q, \delta, q_{0}, \aleph\rangle$ such that, for all states $q\in Q$, and assignments $\assign$, it holds that $\semSL{\perfectrecall}{\assign}{\pos}{\phi}>\thresholdmc$ iff $\tree \in \mathcal{L}(\auto_{\phi,\thresholdmc})$, where $\tree$ is the assignment-state encoding for $\assign$.
\end{lemma}
\begin{proof}[Proof sketch]
    The construction of the APT $\auto_{\phi,\thresholdmc}$ is done recursively on the structure of the formula $\phi$.  
    Let $xcl(\phi)$ be the extended closure of $\phi$ defined analogously to \cite{AlmagorBK14}. 
    The state space $Q$ consists of two types of states. 
    Type-1 states are assertions of the form $(\psi>t)$ or $(\psi<t)$, where $\psi\in xcl(\phi)$ is not an \SL\ formula   
    and $t\in [0,1]$. Type-2 states correspond to $\SL$ formulas.
    The precise definition of $xcl(\phi)$, Type 1 and Type 2 states  is analogously to \cite{AlmagorBK14} and can be found in Appendix \ref{apx:perfectrecall}.
    Let $S$ be the set of Type-1 and Type-2 states for all $\psi\in xcl(\phi)$ and thresholds $t\in[0,1]$. Then, $Q$ is the subset of $S$ constructed on-the-fly according to the transition function defined below.

The transition function $\delta: (\text{Val}_{\phi} \times \setpos) \to {\cal B}^+(\setpos \times Q)$
is defined as follows. For Type-2 states, the transitions are as in the standard translation from $\SL$ to APT \cite{MMPV14}. For the other states, we define the transitions as follows. Let $(f, \pos)\in (\text{Val}_{\phi} \times \setpos)$ and $\oplus \in \{<,>\}$. 
\begin{itemize}

\item $\delta((p>t),(f, \pos))=\left[\begin{array}{ll}
true & \text{ if }p\in \val(\pos) \text{ and } t<1,\\
false & \text{ otherwise.}
\end{array}\right.$

\item
$\delta((p<t),(f, \pos))=\left[\begin{array}{ll}
false & \text{ if }p\in \val(\pos) \text { or } t=0,\\
true & \text{ otherwise.}
\end{array}\right.$ 

\item $\delta((\exists\var \psi) \oplus t), (f, \pos)) = \bigvee_{\act \in \Act} \delta'( \psi \oplus t, (f[\var \mapsto \act], \pos))$  
where $\delta_\psi'$ is obtained by  nondeterminizing the APT $\auto_{\psi,t}$, by applying the classic transformation \cite{MULLER1987267} which gives the equivalent NPT $\auton_{\psi, t} = \langle \text{Val}_{\psi} \times \setpos, \setpos, Q', \delta', q_{0}', \aleph'\rangle$.

\item $\delta(((\var,\ag)\psi \oplus t),(f, \pos)) = 
\delta'( (\psi \oplus t), (f', \pos))$ where $f' = f[\varb \mapsto f(\var)]$ if $\varb \in \text{free}(\psi)$, and $f'=f$ otherwise.

\end{itemize} 

The remaining cases are a simple adaptation of the proof in \cite{AlmagorBK14} (Thm 1) to the input symbols $\text{Val}_{\phi} \times \setpos$. We provide more details of the proof in Appendix \ref{apx:perfectrecall}. 

The initial state of $\auto_{\phi,\thresholdmc}$ is $(\phi>\thresholdmc)$. 
The accepting states are these of the form $(\psi_1 \U \psi_2 < t)$ for Type-1 states, as well as accepting states that arise in the standard translation of Boolean \SL to APT for Type-2 states.  
While the construction as described above is infinite 
only finitely many states are reachable from the initial state, and we can compute these
states in advance.  
\end{proof}
%

Using the threshold and the discounting behavior of the discounted-Until, we can restrict attention to a finite resolution of satisfaction values, enabling the construction of a finite automaton. 
Its size depends on the functions in $\Func$. 
Intuitively, the faster the discounting tends to 0, the fewer states there will be. 
Thus, the exact complexity of model checking \SLD\ (which relies on the size of the APT) depends on two aspects.
First, the alternation of quantifiers in the formula and, second, the type of discounting functions considered. In the specific setting where $\Func$ is composed of  exponential-discounting functions, (\ie, $\Func \subseteq \{\df(j)=\lambda^j : j \in (0, 1) \cap \mathbb{Q}\}$), the overall complexity remains as it is for \SL.  
Exponential discounting functions are perhaps the most common class of discounting functions, as they describe many natural processes (\eg, temperature change 
and effective interest rate 
\cite{shapley1953stochastic,alfaro2003discounting}).

\begin{theorem}
\label{theo-mc}
Assuming that  
functions in $\Func$ are exponential-discounting, 
model checking \SLD\ with 
memoryfull agents is $(k+1)$-\EXPTIME and $k$-\EXPSPACE w.r.t  the number $k$ of  quantifiers alternations in the specification.
\end{theorem}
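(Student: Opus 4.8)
The plan is to reuse, essentially verbatim, the automata-theoretic pipeline of \cite{MMPV14} for \SL\ model checking, replacing its \SL-to-APT step by the translation of Lemma~\ref{lemma:translation}, and then to argue that restricting $\Func$ to exponential-discounting functions keeps the resulting automaton within the size bounds of the plain \SL\ construction. Concretely: given a sentence $\phi$, a \CGS\ $\CGS$ and a threshold $\thresholdmc$, first build the APT $\auto_{\phi,\thresholdmc}$ of Lemma~\ref{lemma:translation}; since $\phi$ is a sentence we have $\free(\phi)=\emptyset$, so the alphabet $\text{Val}_{\phi}\times\setpos$ collapses to $\setpos$ and the only assignment-state encoding is the $\setpos$-unraveling of $\CGS$ from $\pos_\init$, whence $\auto_{\phi,\thresholdmc}$ accepts that unraveling iff $\semSLglobal{\perfectrecall}{\phi}>\thresholdmc$. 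Applying the direction-projection construction of \cite{MMPV14} with distinguished direction $\pos_\init$ yields an NPT $\nauto$ that is non-empty iff $\semSLglobal{\perfectrecall}{\phi}>\thresholdmc$, reducing model checking to NPT emptiness. (As in \cite{AlmagorBK14}, the threshold problem with $\ge$ is reduced to the strict version: under exponential discounting the satisfaction values realizable over a fixed $\CGS$ and $\phi$ are separated by a computable gap $\varepsilon>0$, so $\semSLglobal{\perfectrecall}{\phi}\ge\thresholdmc$ iff $\semSLglobal{\perfectrecall}{\phi}>\thresholdmc-\varepsilon$.)

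The key step is a size bound on $\auto_{\phi,\thresholdmc}$. When every $\df\in\Func$ is exponential-discounting, say $\df(j)=\lambda^j$ with $\lambda\in(0,1)\cap\mathbb{Q}$, I would show that the extended closure $xcl(\phi)$ driving the construction is small: unfolding an assertion $(\phi_1\U_{\df}\phi_2)\oplus t$ by one step only spawns assertions about $(\phi_1\U_{\df}\phi_2)\oplus(t/\lambda)$, and the threshold after $m$ unfoldings is $\thresholdmc/\lambda^m$, so once it exceeds $1$ the ``$>$''-assertion becomes unsatisfiable and the ``$<$''-assertion trivially true, terminating the recursion; this happens after $m>\log_{1/\lambda}(1/\thresholdmc)$ steps, i.e.\ after polynomially many steps in the bit-sizes of $\thresholdmc$ and of the $\lambda$'s. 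Hence the number of Type-1 states, and together with the Boolean part the whole of $|xcl(\phi)|$, is polynomial in $|\phi|$ and in the numeric data, so $\auto_{\phi,\thresholdmc}$ has the same asymptotic shape as the APT of the plain \SL\ construction: polynomial in $|\CGS|$, with the only super-polynomial blow-ups being the $k$ nondeterminization steps forced by the $k$ alternations of strategy quantifiers. A single (macro-)state of $\auto_{\phi,\thresholdmc}$ therefore has size bounded by a tower of $k$ exponentials in $\mathrm{poly}(|\phi|)$ (and the numeric data) and polynomial in $|\CGS|$.

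The two bounds then follow exactly as for \SL. For the time bound, the direction projection costs one more exponential, so $\nauto$ has size a tower of $k+1$ exponentials in $|\phi|$ and polynomial in $|\CGS|$, and deciding NPT emptiness amounts to solving its acceptance parity game, which is polynomial in the automaton; this gives $(k+1)$-\EXPTIME\ (and $\Ptime$ in $|\CGS|$). For the space bound, I would avoid materializing $\nauto$: non-emptiness of a parity tree automaton is witnessed by a finite-memory strategy in the acceptance game and can be detected by a nondeterministic search for a suitable lasso that stores only a constant number of macro-states plus a step counter bounded by the number of macro-states; a macro-state and such a counter each fit in space bounded by a tower of $k$ exponentials, so by Savitch's theorem the procedure runs in $k$-\EXPSPACE. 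Combining the two, model checking \SLD\ with perfect recall under exponential-discounting $\Func$ is in $(k+1)$-\EXPTIME\ and in $k$-\EXPSPACE\ in the number $k$ of quantifier alternations.

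I expect the main obstacle to be the size bound of the second paragraph: one must verify that the discounting machinery — in particular the ``replace a discounted operator above the threshold by a Boolean one'' device of \cite{AlmagorBK14} — contributes only a polynomial factor and, crucially, keeps doing so through each of the $k$ nondeterminizations, so that no exponential is introduced beyond those already present in the \SL\ construction. The $\ge$-versus-$>$ reduction, and checking that the suprema in the semantics of $\U$ and $\U_{\df}$ behave well over the (infinite, but finitely generated) unraveling of $\CGS$, are the secondary points to nail down.
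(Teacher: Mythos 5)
Your proposal follows essentially the same route as the paper's proof: reuse the automata-theoretic pipeline of \cite{MMPV14} with the APT of Lemma~\ref{lemma:translation} in place of the \SL-to-APT step, and observe that for exponential-discounting functions the number of additional Type-1 states is bounded by the least $j$ with $\lambda^j<\thresholdmc$, hence polynomial in the descriptions of $\thresholdmc$ and $\lambda$, so the overall $(k+1)$-\EXPTIME\ and $k$-\EXPSPACE\ bounds of the \SL\ procedure are preserved. You supply more detail than the paper's sketch (the direction projection, the emptiness test, the space-bounded search, and the reduction of $\geq$ to $>$), but the underlying argument is the same.
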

\begin{proof}[Proof sketch]  
The model checking procedure from \cite{MMPV14} is  $(k+1)$-\EXPTIME-complete and $k$-\EXPSPACE  w.r.t the number $k$ of quantifiers alternations in the specification.  
Let $\thresholdmc \in (0,1)$ be a threshold. 
%
When discounting by an exponential-discounting function $\df(j)=\lambda^j  \in \Func$, the
number of states in the APT constructed as per Lemma \ref{lemma:translation} is proportional to the maximal number $j$ such that $\lambda^j < \thresholdmc$, 
 which is polynomial in the description length of $\thresholdmc$ and $\lambda$ \cite{AlmagorBK14}.  
\end{proof}

\section{Conclusion and Discussion}
\label{sec:conclusion}
In this paper, we proposed Strategy Logic with discounting (\SLD), which contains an operator that captures the idea that the longer it takes to fulfill a requirement, the smaller the satisfaction value is.  
This work extends the research on temporal and strategic reasoning in Game Theory.  %
As advocated by \citeauthor{pauly2003logic} (\citeyear{pauly2003logic}), logics for strategic reasoning can have an important role in the specification and verification of game-theoretical problems and, in particular, related to Automated Mechanism Design (AMD). 
Indeed, recent works have proposed a new approach for AMD based on model-checking and synthesis from specifications in \SLF \cite{SLKF_KR21,ijcai2022p61}. Remarkably, \SLD\ provides less complicated machinery in relation to \SLF, as it is defined over classical concurrent game structures. More importantly, it brings a new dimension for reasoning about mechanisms that take into consideration how events are affected by how long in the future they occur. 

There are several interesting directions for future work, including considering synthesis from \SLD-specifications as well as the setting of imperfect information.  With  \SL already, imperfect information yields undecidability, but known tractable fragments exist~\cite{Berthon21,BLMR20}. We will investigate them in the case of \SLD.


\appendix

\section{Model checking with memoryless strategies}
\label{apx:memoryless}
\setcounter{theorem}{0}
\setcounter{lemma}{0}
\begin{theorem}
\label{theo-mc2}
Assuming that functions in $\Func$ can be computed in polynomial space, model checking \SLD\ with 
memoryless agents is \PSPACE-complete.
\end{theorem}

\begin{proof}
    The lower bound is inherited from \SL \cite{CERMAK2018588}\footnote{The proof provided in  \cite{CERMAK2018588}(Thm. 1) considers \SL\ with epistemic operators and imperfect information, but these operators do not affect the complexity results. }, which is captured by \SLD.  
    For the upper bound, we first show that each recursive call only needs at most polynomial space. Most cases are treated analogously to the proof of Theorem 2 in \cite{SLKF_KR21}.

First, observe that each assignment $\assign$ can be stored in space $O((|\free(\phi)|+ |\Ag|)\cdot |\setpos|\cdot \log|\Act|)$. Next, for the base case, it is clear that $\semSL{\memoryless}{\assign}{\pos}{p}$ can be computed in constant space. For strategy quantification $\semSL{\memoryless}{\assign}{\pos}{\Estrata{\ag}\varphi}$, besides the recursive call to $\semSL{\memoryless}{\assign[\var\mapsto\strat]}{\pos}{\varphi}$ we need space $O(|\setpos|\cdot\log|\Act|)$ to store the current strategy and the current maximum value computed. The case for $\semSL{\memoryless}{\assign}{\pos}{(\ag,\var)\varphi}$ is clear. 
For $\semSL{\memoryless}{\assign}{\pos}{\phi_1\land\phi_2}$, we need to compute two recursive calls $\semSL{\memoryless}{\assign}{\pos}{\phi_1}$ and $\semSL{\memoryless}{\assign}{\pos}{\phi_2}$ and  compute their maximum. Similarly,  for $\semSL{\memoryless}{\assign}{\pos}{\neg \phi}$, we make one recursive call to $\semSL{\memoryless}{\assign}{\pos}{\phi}$ and subtract the resulting value from 1.  For $\semSL{\memoryless}{\assign}{\pos}{\X\phi}$, we only need to observe that the next position in $\out(\assign,\pos)$ is computed in constant space. 

We focus on the case $\semSL{\memoryless}{\assign}{\pos}{\phi_1\U_\df\phi_2}$.  Let $\iplay=\out(\pos,\assign)$. 
When evaluating a discounted operator on $\iplay$, one can restrict attention to two cases: either the satisfaction value of the formula goes below $\thresholdmc$, in which case this happens after a bounded prefix (with index $m \geq 0$), or the satisfaction value always remains above $\thresholdmc$, in which case we can replace the discounted operator with a Boolean one. This allows us to look only at a finite number of stages.

In the first case, let $m \geq 0$ denote the first index in which the satisfaction value of the formula goes below $\thresholdmc$.
\begin{align*}
    \semSL{\memoryless}{\assign}{\pos}{\phi_1\U_\df\phi_2} = & \sup_{i \ge 0}     \min 
        \Big
      (\df(i)\semSL{\memoryless}{\assign}{\iplay_{i}}{\phi_2}, 
      \\
      & \hspace{30pt} \min_{0\leq j <i} 
      \semSL{\memoryless}{\assign}{\iplay_{j}}{\phi_1}\Big)\\
      &= \max_{0\le i \le m}    \min  \Big
      (\df(j)\semSL{\memoryless}{\assign}{\iplay_{i}}{\phi_2}, 
      \\
      & \hspace{30pt}  \min_{0\leq j <i} 
      \semSL{\memoryless}{\assign}{\iplay_{j}}{\phi_1}\Big)
\end{align*}
This can be computed by a while loop that increments $i$, computes 
$\semSL{\memoryless}{\assign}{\iplay_{i}}{\phi_2}$, 
 $\min_{0\leq j <i} 
      \semSL{\memoryless}{\assign}{\iplay_{j}}{\phi_1}$ and their minimum, records the result if it is bigger than the previous maximum, and stops upon reaching a position that has already been visited. 
This requires to store the current value of $\min_{0\leq j <i} 
      \semSL{\memoryless}{\assign}{\iplay_{j}}{\phi_1}$, the current maximum, and the list of positions already visited, which are at most $|\setpos|$.
The second case is treated as for Boolean until. 

   Finally, we consider the case $\semSL{\memoryless}{\assign}{\pos}{\phi_1\until\phi_2}$.   
Notice that, since $\CGS$ has finitely many positions, there exist two indices $k<l$ such that $\iplay_k=\iplay_l$, and since strategies depend only on the current position, the suffix of $\iplay$ starting at index $l$ is equal to the suffix starting at index $k$. 
So there exist $\fplay_1=\pos_0\ldots\pos_{k-1}$ and $\fplay_2=\pos_{k}\ldots\pos_{l-1}$ such that $\iplay=\fplay_1\cdot \fplay_2^\omega$. 
It suffices to compute the prefix of $\iplay$ until the indices $l$.  
It follows that 
\begin{align*}
    \semSL{\memoryless}{\assign}{\pos}{\phi_1\U\phi_2} &= \sup_{i \ge 0}     \min 
        \Big
      (\semSL{\memoryless}{\assign}{\iplay_{i}}{\phi_2}, 
      \min_{0\leq j <i} 
      \semSL{\memoryless}{\assign}{\iplay_{j}}{\phi_1}\Big)\\
      &= \max_{0\le i \le l}    \min \Big
      (\semSL{\memoryless}{\assign}{\iplay_{i}}{\phi_2}, 
      \min_{0\leq j <i} 
      \semSL{\memoryless}{\assign}{\iplay_{j}}{\phi_1}\Big)
\end{align*}
which is computed analogously to the previous case.  

     
Next, the number of nested recursive calls is at most $|\phi|$, so the total space needed is bounded by $|\phi|$ times a polynomial in the size of the input, and is thus polynomial.
\end{proof}

\section{Model checking with perfect recall}
\label{apx:perfectrecall}

Before describing the construction of the APT, we need the following proposition, which reduces an extreme satisfaction
of an \SLD\ formula, meaning satisfaction with a value of either 0 or 1, to a Boolean satisfaction of an \SL\
formula. For the semantics of Boolean \SL, the reader may refer to \cite{MMPV14}. 
The proof proceeds by induction on the structure of the formulas.

\begin{proposition}
\label{lem:SL for positive}
Given a \CGS $\CGS$, state $\pos$, and an $\SLD$ formula $\phi$, there exist $\SL$ formulas $\posi{\phi}$ and $\notone{\phi}$ such that $|\posi{\phi}|$ and $|\notone{\phi}|$ are both $O(|\phi|)$ and the following hold for every assignment $\assign$. 
\begin{enumerate}
\item If $\semSL{\perfectrecall}{\assign}{\pos}{\phi}>0$ then $\CGS,\assign,\pos \models \posi{\phi}$, and if $\semSL{\perfectrecall}{\assign}{\pos}{\phi}<1$ then $\CGS\assign\pos\models \notone{\phi}$.
\item If $\CGS,\assign,\pos \models \posi{\phi}$ then $\semSL{\perfectrecall}{\assign}{\pos}{\phi}> 0$ and if $\CGS,\assign,\pos \models \notone{\phi}$ then $\semSL{\perfectrecall}{\assign}{\pos}{\phi}<1$.
\end{enumerate} 
\end{proposition}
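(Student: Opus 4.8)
The plan is to build $\posi{\phi}$ and $\notone{\phi}$ together by induction on the structure of $\phi$, so that for every assignment $\assign$ one has both ``$\semSL{\perfectrecall}{\assign}{\pos}{\phi}>0$ iff $\CGS,\assign,\pos\models\posi{\phi}$'' and ``$\semSL{\perfectrecall}{\assign}{\pos}{\phi}<1$ iff $\CGS,\assign,\pos\models\notone{\phi}$''; the two items of the statement together amount to these two biconditionals (item~1 being the value-to-formula direction, item~2 the converse). The two polarities must be defined simultaneously because negation exchanges them. Every clause below adds only $O(1)$ symbols, or a number of symbols bounded by a constant depending only on the fixed family $\Func$, so the bounds $|\posi{\phi}|,|\notone{\phi}|=O(|\phi|)$ follow by a routine count once all clauses are fixed.

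The propositional and strategic cases are the expected ones, and correctness is immediate from Definition~\ref{def:semSL}. For an atom $p$ the value is already in $\{0,1\}$, so $\posi{p}\colonequals p$ and $\notone{p}\colonequals\neg p$. Negation swaps polarities, $\posi{\neg\psi}\colonequals\notone{\psi}$ and $\notone{\neg\psi}\colonequals\posi{\psi}$, using $\semSL{\perfectrecall}{\assign}{\pos}{\neg\psi}=1-\semSL{\perfectrecall}{\assign}{\pos}{\psi}$. Disjunction dualises, $\posi{\psi_1\vee\psi_2}\colonequals\posi{\psi_1}\vee\posi{\psi_2}$ and $\notone{\psi_1\vee\psi_2}\colonequals\notone{\psi_1}\wedge\notone{\psi_2}$, since a disjunction gets the maximum of the values of its disjuncts. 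Binding commutes, $\posi{(\ag,\var)\psi}\colonequals(\ag,\var)\posi{\psi}$ and $\notone{(\ag,\var)\psi}\colonequals(\ag,\var)\notone{\psi}$, and so does the next operator, $\posi{\X\psi}\colonequals\X\posi{\psi}$ and $\notone{\X\psi}\colonequals\X\notone{\psi}$. For strategy quantification, the value of $\Estrat\psi$ is a maximum over strategies of the value of $\psi$, hence positive iff some strategy makes $\psi$ positive and below $1$ iff every strategy keeps $\psi$ below $1$; so $\posi{\Estrat\psi}\colonequals\Estrat\posi{\psi}$ and $\notone{\Estrat\psi}\colonequals\Astrat\notone{\psi}$. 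In all of these the induction hypothesis yields correctness.

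The two Until operators carry the content. The ``positive'' side is uniform and easy, because a supremum of minima is positive exactly when one of its terms is: $\posi{\phi_1\U\phi_2}\colonequals\posi{\phi_1}\U\posi{\phi_2}$, and, since $\df$ is non-increasing with $\lim_{i\to\infty}\df(i)=0$, $\df$ is positive only on an initial segment of indices, so $\posi{\phi_1\U_\df\phi_2}\colonequals\posi{\phi_1}\U\posi{\phi_2}$ when $\df$ never reaches $0$ (in particular for exponential discounting), and otherwise it is an $\X$-bounded disjunction over $\posi{\phi_1},\posi{\phi_2}$ whose length is the last index at which $\df$ is positive. For the ``below $1$'' side of the discounted Until, discounting again helps: every term of the supremum defining $\semSL{\perfectrecall}{\assign}{\pos}{\phi_1\U_\df\phi_2}$ is at most $\df(i)\le\df(0)$, so past the last index $m_\df$ at which $\df$ still equals $1$ (if any) the whole tail of the supremum is bounded by $\df(m_\df+1)<1$; hence the value can reach $1$ only through one of the finitely many terms with index $\le m_\df$, on which the supremum is an attained maximum. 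So $\notone{\phi_1\U_\df\phi_2}$ may be taken to be the negation of an $\X$-bounded Until of length $m_\df$ over $\neg\notone{\phi_1}$ and $\neg\notone{\phi_2}$ (and $\top$ if $\df(0)<1$); for exponential discounting $m_\df=0$ and this collapses to $\notone{\phi_2}$. The constants involved depend only on $\Func$, so linearity is preserved.

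The one genuinely delicate case --- and the one I expect to be the main obstacle --- is the ``below $1$'' side of the \emph{undiscounted} Until $\phi_1\U\phi_2$. Unlike a positive supremum, a supremum of minima can equal $1$ without being attained: this occurs precisely when $\phi_1$ stays at value $1$ along the whole play while the value of $\phi_2$ climbs towards $1$ without reaching it, which a negated discounting subformula is able to do. Thus the naive candidate $\notone{\phi_1\U\phi_2}\colonequals\neg\bigl((\neg\notone{\phi_1})\U(\neg\notone{\phi_2})\bigr)$ only expresses ``every term is below $1$'', which is strictly weaker than ``the supremum is below $1$''. I would resolve this exactly as in the proof of Theorem~1 of \cite{AlmagorBK14}: case-split on whether $\phi_1$ ever drops below $1$ along the play; if it first does so at some index $j_0$ then only the prefix up to $j_0$ contributes and the supremum is an attained maximum, so the bounded candidate above is correct; if $\phi_1$ never drops below $1$ then the value of $\phi_1\U\phi_2$ equals that of $\F\phi_2$, and one analyses when $\F\phi_2$ reaches $1$ by the same recursion, which terminates because the innermost relevant subformulas are Boolean $\SL$ formulas with values in $\{0,1\}$, for which ``approaching $1$'' and ``equal to $1$'' coincide. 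Assembling these pieces into a single $\SL$ formula of size $O(|\phi|)$ while checking that the recursion closes is the part demanding the most care; up to re-running it over the richer syntax of $\SLD$ and over assignments instead of single computations, this is the reduction already carried out in \cite{AlmagorBK14}.
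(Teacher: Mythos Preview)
Your proposal is correct and follows the same approach as the paper: both proceed by induction on the structure of $\phi$. The paper's own proof consists of the single sentence ``The proof proceeds by induction on the structure of the formulas,'' so your detailed case analysis---including the identification of the undiscounted Until as the delicate case and the appeal to the reduction in \cite{AlmagorBK14}---goes considerably beyond what the paper supplies while remaining fully in line with its intended argument.
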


Henceforth, given an $\SLD$ formula $\phi$, we refer to $\posi{\phi}$ as in Proposition~\ref{lem:SL for positive}.


Before detailing the proof for the model checking we introduce some additional definitions. For a function $f:\mathbb{N}\to [0,1]$ and for $k\in \mathbb{N}$, we define $f^{+k}:\mathbb{N}\to [0,1]$ as follows. For every $i\in \mathbb{N}$ we have that $f^{+k}(i)=f(i+k)$. 

Let $\phi$ be an $\SLD$ formula over $AP$.
We define the {\em extended closure} of $\phi$, denoted $xcl(\phi)$, to be the set of all the formulas $\psi$ of the following {\em classes}:
\begin{enumerate}
\item $\psi$ is a subformula of $\phi$.
\item $\psi$ is a subformula of $\posi{\theta}$ or $\posi{\neg \theta}$, where $\theta$ is a subformula of $\phi$. 

\item $\psi$ is of the form $\theta_1\U_{\df^{+k}} \theta_2$ for $k\in \mathbb{N}$, where $\theta_1\U_{\df}\theta_2$ is a subformula of $\phi$.
\end{enumerate}

\begin{lemma}  
\label{lemma:translation}
Let $\CGS$ be a \CGS, $\phi$ an \SLD\ formula, and  $\thresholdmc\in [0,1]$ be a threshold.  Then, there exists an 
$\auto_{\phi,\thresholdmc} = \langle \text{Val}_{\phi} \times \setpos, \setpos, Q, \delta, q_{0}, \aleph\rangle$ such that, for all states $q\in Q$, and assignments $\assign$, it holds that $\semSL{\perfectrecall}{\assign}{\pos}{\phi}>\thresholdmc$ iff $\tree \in \mathcal{L}(\auto_{\phi,\thresholdmc})$, where $\tree$ is the assignment-state encoding for $\assign$.
\end{lemma}
\begin{proof}[Proof sketch]
    The construction of the APT $\auto_{\phi,\thresholdmc}$ is done recursively on the structure of the formula $\phi$. 
    The state space $Q$ consists of two types of states. Type-1 states are assertions of the form $(\psi>t)$ or $(\psi<t)$, where $\psi\in xcl(\phi)$ is of Class 1 or 3 
    and $t\in [0,1]$. Type-2 states correspond to $\SL$ formulas of Class 2.
    Let $S$ be the set of Type-1 and Type-2 states for all $\psi\in xcl(\phi)$ and thresholds $t\in[0,1]$. Then, $Q$ is the subset of $S$ constructed on-the-fly according to the transition function defined below. We later show that $Q$ is indeed finite.

The transition function $\delta: (\text{Val}_{\phi} \times \setpos) \to {\cal B}^+(\setpos \times Q)$
is defined as follows. For Type-2 states, the transitions are as in the standard translation from $\SL$ to APT \cite{MMPV14}. For the other states, we define the transitions as follows. Let $(f, \pos)\in (\text{Val}_{\phi} \times \setpos)$, $\oplus \in \{<,>\}$, and  $\iplay=\out(\pos,\assign)$.  
\begin{itemize}

\item $\delta((true>t), (f,\pos))=
\begin{dcases} true & \text{ if }t<1\\ \textit{false} & \text{ if } t=1 \end{dcases}$

\item $\delta((\textit{false}>t), (f,\pos))=\textit{false}$
\item $\delta((true<t), (f,\pos))=\textit{false}$

\item $\delta((\textit{false}<t), (f,\pos))=\begin{dcases}true & \text{ if }t>0 \\ \textit{false} &\text{ if } t=0. \end{dcases}$

\item $\delta((p>t),(f, \pos))=\begin{dcases}
true & \text{ if }p\in \val(\pos) \text{ and } t<1 \\
\textit{false} & \text{ otherwise.}
\end{dcases}$

\item
$\delta((p<t),(f, \pos))=\begin{dcases}
\textit{false} & \text{ if }p\in \val(\pos) \text { or } t=0,\\
true & \text{ otherwise.}
\end{dcases}$

\item $\delta((\psi_1\vee \psi_2 \oplus t),(f,\pos))=\delta((\psi_1 \oplus t),(f,\pos))\vee \delta( (\psi_2 \oplus t),(f,\pos))$ 


\item $\delta((\exists\var \psi) \oplus t), (f, \pos)) = \bigvee_{\act \in \Act} \delta'( \psi \oplus t, (f[\var \mapsto \act], \pos))$  
where $\delta_\psi'$ is obtained by  nondeterminizing the APT $\auto_{\psi,t}$, by applying the classic transformation \cite{MULLER1987267} which gives the equivalent NPT $\auton_{\psi, t} = \langle \text{Val}_{\psi} \times \setpos, \setpos, Q', \delta', q_{0}', \aleph'\rangle$

\item $\delta(((\var,\ag)\psi \oplus t),(f, \pos)) = 
\delta'( (\psi \oplus t), (f', \pos))$ where $f' = f[\varb \mapsto f(\var)]$ if $\varb \in \text{free}(\psi)$, and $f'=f$ otherwise
 
\item $\delta((\neg\psi \oplus t),(f, \pos)) = \delta'((\psi \oplus t), (f, \pos))$  where $\delta'$  is obtained by dualizing the automaton $\auto_{\psi,t} $ \cite{MULLER1987267}, which gives the automata $\bar{\auto}_{\psi,t}=\langle \text{Val}_{\psi} \times \setpos, \setpos, Q', \delta', q_{0}', \aleph'\rangle$


\item $\delta((\X \psi_1>t),(f,\pos))=\delta((\psi_1>t),(f,\iplay_0))$ 
 \item $\delta((\X \psi_1<t),(f,\pos))=\delta((\psi_1<t),(f,\iplay_0))$

\item $\delta((\psi_1\U \psi_2>t),(f,\pos))\!=\!
\begin{dcases}
\delta_> & \text{ if } 0<t< 1\\ \textit{false} &  \text{ if } t \geq 1\\ \delta_0 & \text{ if } t=0 \end{dcases}  $

where $\delta_> = \delta((\psi_2> t),(f,\pos))\vee [\delta((\psi_1>t),(f,\pos))\wedge (\psi_1\U \psi_2> t)]$ and $\delta_0= \delta((\posi{(\psi_1\U \psi_2)}),(f,\pos))$

\item $\delta((\psi_1\U \psi_2<t),(f,\pos))\!=\!\begin{dcases} \delta'& \text{ if } 0<t\le 1\\ true &  \text{ if } t > 1\\ \textit{false} & \text{ if }  t=0 \end{dcases} $
where $\delta_< = \delta((\psi_2< t),(f,\pos))\wedge [\delta((\psi_1<t),(f,\pos))\vee (\psi_1\U \psi_2< t)] $

\item $\delta((\psi_1\U_\df \psi_2>t),(f,\pos))\!=\!\begin{dcases} \delta_> & \text{ if } 0<\frac{t}{\df(0)}< 1\\ \textit{false} &  \text{ if } \frac{t}{\df(0)} \geq 1\\ \delta_0 & \text{ if } \frac{t}{\df(0)}=0 \end{dcases} $

where $\delta_>  = \delta((\psi_2> \frac{t}{\df(0)}),(f,\pos))\vee [\delta((\psi_1>\frac{t}{\df(0)}),\allowbreak (f,\pos)) \allowbreak \wedge (\psi_1\U_{\df^{+1}}\psi_2> t)]$ and $\delta((\posi{(\psi_1\U_\df \psi_2)}),(f,\pos))$ 

\item$\delta((\psi_1\U_\df \psi_2<t),(f,\pos))\!=\!  \begin{dcases} \delta_< & \text{ if } 0<\frac{t}{\df(0)}\le 1\\ true &  \text{ if } \frac{t}{\df(0)}> 1\\ \textit{false} & \text{ if } \frac{t}{\df(0)}=0 
\end{dcases} $ 

where $\delta_< = \delta((\psi_2<\frac{t}{\df(0)}),(f,\pos))\wedge  [\delta((\psi_1<\frac{t}{\df(0)}),(f,\pos))\vee (\psi_1\U_{\df^{+1}}\psi_2<t)]$
\end{itemize} 


The initial state of $\auto_{\phi,\thresholdmc}$ is $(\phi>\thresholdmc)$. 
The accepting states are these of the form $(\psi_1 \U \psi_2 < t)$, as well as accepting states that arise in the standard translation of Boolean \SL to APT (in Type-2 states).  
While the construction as described above is infinite (indeed, uncountable), only finitely many states are reachable from the initial state, and we can compute these
states in advance.  
This follows from the fact that once the proportion between $t$ and $\df(i)$ goes
above 1, for Type-1 states associated with threshold $t$ and sub formulas with a discounting function $\df$, we do not have to generate new states.
\end{proof}
\section*{Acknowledgments}
We thank the ANR project AGAPE ANR-18-CE23-0013,  the PNNR FAIR project, the InDAM project ``Strategic Reasoning in Mechanism Design'', the PRIN 2020 Project RIPER, and the EU ICT-48 2020 project TAILOR (No. 952215).  
\bibliographystyle{named}

\end{document}